\newcommand{\argmin}{\mathop{\mathrm{argmin}}}
\newcommand{\argmax}{\mathop{\mathrm{argmax}}}
\newcommand{\Tr}{\mathop{\mathrm{Tr}}}
\newcommand{\hp}{\mathop{\mathrm{hp}}}
\def\N{\mathbb{N}}
\def\R{\mathbb{R}}
\def\E{\mathbb{E}}
\def\U{\mathcal{U}}
\newtheorem{lemma}{Lemma}
\newtheorem{observation}{Observation}
\newtheorem*{example}{Example}
\newif\if@restonecol
\newcommand{\ie}{i.e.}
\newcommand{\parm}{{\xi}}
\newcommand{\vecpar}{\boldsymbol{\parm}}
\newcommand{\uniGPC}{\phi }
\newcommand{\multiGPC}{\Psi }
\newcommand{\polyInd}{\alpha}
\newcommand{\basisInd}{\boldsymbol{\polyInd}}
\newcommand{\ten}[1]{\mathcal{#1}}
\newcommand{\mat}[1]{\mathbf{#1}}
\newcommand{\vect}[1]{\boldsymbol{#1}}
\newcommand{\reff}[1]{(\ref{#1})}
\def\BibTeX{{\rm B\kern-.05em{\sc i\kern-.025em b}\kern-.08em
    T\kern-.1667em\lower.7ex\hbox{E}\kern-.125emX}}
\begin{document}

\title{High-Dimensional Uncertainty Quantification via Tensor Regression with Rank Determination and Adaptive Sampling}

\author{\IEEEauthorblockN{Zichang He and Zheng Zhang, \textit{Member, IEEE}}
\thanks{The preliminary results of this work were published in EPEPS 2020~\cite{he2020high}. 
This work was partly supported by NSF grants \#1763699 and \#1846476. 

Zichang He and Zheng Zhang are with Department of Electrical and Computer Engineering, University of California, Santa Barbara, CA 93106, USA (e-mails: zichanghe@ucsb.edu;  zhengzhang@ece.ucsb.edu).
}
}
\specialpapernotice{(Invited Paper)}


\maketitle
\begin{abstract}
Fabrication process variations can significantly influence the performance and yield of nano-scale electronic and photonic circuits. 
Stochastic spectral methods have achieved great success in quantifying the impact of process variations, but they suffer from the curse of dimensionality. 
Recently, low-rank tensor methods have been developed to mitigate this issue, but two fundamental challenges remain open: how to automatically determine the tensor rank and how to adaptively pick the informative simulation samples. 
This paper proposes a novel tensor regression method to address these two challenges. 
We use a $\ell_{q}/ \ell_{2}$ group-sparsity regularization to determine the tensor rank. The resulting optimization problem can be efficiently solved via an alternating minimization solver.
We also propose a two-stage adaptive sampling method to reduce the simulation cost. Our method considers both exploration and exploitation via the estimated Voronoi cell volume and nonlinearity measurement respectively. 
The proposed model is verified with synthetic and some realistic circuit benchmarks, on which our method can well capture the uncertainty caused by 19 to 100 random variables with only 100 to 600 simulation samples.
\end{abstract}
\begin{IEEEkeywords}
Tensor regression, high dimensionality, uncertainty quantification, polynomial chaos, process variation, rank determination, adaptive sampling.  
\end{IEEEkeywords}

\section{Introduction}\label{sec:introduction}
Fabrication process variations (e.g., surface roughness of
interconnects and photonic waveguide, and random
doping effects of transistors) have been a major concern in nano-scale chip design. They can significantly influence chip performance and decrease product yield~\cite{boning2008variation}.
Monte Carlo (MC) is one of the most popular methods o quantify the chip performance under uncertainty,  but it requires a huge amount of computational cost~\cite{kalos2009monte}. 
Instead, stochastic spectral methods based on generalized polynomial chaos (gPC)~\cite{xiu2003modeling} offer efficient solutions for fast uncertainty quantification by approximating a real uncertain circuit variable as a linear combination of some stochastic basis functions~\cite{strunz2008stochastic,zhang2013stochastic,manfredi2014stochastic}. 
These techniques have been increasingly used in design automation~\cite{He2019ICCAD,ahadi2016sparse,manfredi2019rational,manfredi2015generalized,yucel2015me,wang2015bayesian,zhang2019efficient,cui2018stochastic}. The main challenge of the stochastic spectral method is the curse of dimensionality: the computational cost grows very fast as the number
of random parameters increases. In order to address this fundamental challenge, many high-dimensional solvers have been developed. The representative techniques include (but are not limited to) compressive sensing~\cite{li2010finding,cui2019high}, hyperbolic regression~\cite{ahadi2016hyperbolic}, analysis of variance (ANOVA)~\cite{ma2010adaptive,yang2012adaptive}, model order reduction~\cite{el2010variation}, and hierarchical modeling~\cite{zhang2014stochastic_cicc,zhang2014enabling}, and tensor methods~\cite{zhang2016big,zhang2014enabling}. 

The low-rank tensor approximation has shown promising performance in solving high-dimensional uncertainty quantification problems~\cite{zhang2016big_SPI,rai2014sparse,konakli2016polynomial,chevreuil2015least,nouy2017low,zhang2016big}. By low-rank tensor decomposition, one may reduce the number of unknown variables in uncertainty quantification to a linear function of the parameter dimensionality. However, there is a fundamental question: how can we determine the tensor rank and the associated model complexity? Because it is hard to exactly determine a tensor rank {\it a-priori}~\cite{hillar2013most}, existing methods often use a tensor rank pre-specified by the user or use a greedy method to update the tensor rank until convergence~\cite{zhang2016big,shi2019meta,tang2020rank}. These methods often offer inaccurate rank estimation and are complicated in computation.
Besides rank determination, another important question is: how can we adaptively add a few simulation samples to update the model with a low computation budget? This is very important in electronic and photonic design automation because obtaining each piece of data sample requires time-consuming device-level or circuit-level numerical simulations. 


\textbf{Paper contributions.} 
We propose a novel tensor regression method for high-dimensional uncertainty quantification. Tensor regression has been studied in machine learning and image data analysis~\cite{zhou2013tensor,kossaifi2020tensor,guo2011tensor}. 
There are some existing works of automatic rank determination~\cite{zhao2015bayesian,luan2019prediction,li2020evolutionary} and adaptive sampling~\cite{krishnamurthy2013low,he2020active} for tensor decomposition and completion. 
The Bayesian frameworks~\cite{guhaniyogi2017bayesian,yu2018tensor} can enable and guide the adaptive sampling procedure for tensor regression, but limit the model in the meanwhile.
Focusing on uncertainty quantification, there are few works about tensor regression and its automatic rank determination and adaptive sampling. 
The main contributions of this paper include:         
\begin{itemize}[leftmargin=*]
    \item We formulate high-dimensional uncertainty quantification as a tensor regression problem. We further propose a $\ell_{q}/ \ell_{2}$ group-sparsity regularization method to determine rank automatically. Based on variation equality, the tensor-structured regression problem can be efficiently solved via a block coordinate descent algorithm with an analytical solution in each subproblem.
    \item We propose a two-stage adaptive sampling method to reduce the simulation cost. This method balances the exploration and exploitation via combining the estimation of Voronoi cell volumes and the nonlinearity of an output function.  
    \item We verify the proposed uncertainty quantification model on a 100-dim synthetic function, a 19-dim photonic band-pass filter, and a 57-dim CMOS ring oscillator. Our model can well capture the high-dimensional stochastic output with only 100-600 samples. 
\end{itemize}
Compared with our conference paper~\cite{he2020high}, this manuscript presents the following additional results:
\begin{itemize}[leftmargin=*]
    \item The detailed implementations of the proposed method, including both the compact tensor regression solver and the adaptive sampling procedure (Section~\ref{sec:proposed_method} and~\ref{sec:adap_sampling})
    \item The post-processing step of extracting statistical information from the obtained tensor regression model (Section~\ref{sec:post-processing}).
    \item The enriched experiments (Section~\ref{sec:numerical_results}), including a demonstrative synthetic example and detailed comparisons with other methods. 
\end{itemize}

\section{Notation and Preliminaries}
\label{sec:preliminaries}

Throughout this paper, a scalar is represented by a lowercase letter, e.g., $x \in \mathbb{R}$; a vector or matrix is represented by a boldface lowercase or capital letter respectively, e.g., $\mat{x} \in \mathbb{R}^n$ and $\mat{X} \in \mathbb{R}^{m\times n}$. A tensor,
which describes a multidimensional data array, is represented
by a bold calligraphic letter, e.g., $\ten{X} \in \mathbb{R}^{n_1\times n_2 \cdots \times n_d}$. The $(i_1, i_2, \cdots, i_d)$-th data element of a tensor $\ten{X}$ is denoted as $x_{i_1 i_2 \cdots i_d}$. Obviously $\ten{X}$ reduces to a matrix $\mat{X}$ when $d=2$, and its data element is $x_{i_1i_2}$.
In this section, we will briefly introduce the background of generalized polynomial chaos (gPC) and tensor computation.

\subsection{Generalized Polynomial Chaos Expansion}
Let $\vecpar = \left[\parm_1,\ldots,\parm_d \right] \in \R^d$ be a random vector describing fabrication process variations with mutually independent components. We aim to estimate the interested performance metric $y(\vecpar)$ (e.g., chip frequency or power) under such uncertainty.
We assume that $y(\vecpar)$ has a finite variance under the process variations. 
A truncated gPC expansion approximates $y(\parm)$ as the summation of a series of orthornormal basis functions~\cite{xiu2003modeling}:
\begin{equation}
\label{eq:Multi_expansion}
    y(\vecpar)\approx \hat{y}(\vecpar) = \sum_{\basisInd \in \Theta} {c}_{\basisInd}\multiGPC_{\basisInd}(\vecpar),
\end{equation}
where $\basisInd \in \N^d$ is an index vector in the index set $\Theta$, ${c}_{\basisInd}$ is the coefficient, and $\multiGPC_{\basisInd}$ is a polynomial basis function of degree $|\basisInd|=\alpha_1 + \alpha_2+ \cdots +\alpha_d$. 
One of the most commonly used index set is the total degree one, which selects multivariate polynomials up to a total degree $p$, i.e., 
\begin{equation}\label{eq:total_scheme}
 \Theta = \{ \basisInd | \alpha_k \in \N, 0 \le \sum_{k=1}^d \alpha_k \le p \},
\end{equation}
leading to a total of $\frac{\left(d+p\right)!}{d!p!}$ terms of expansion.
Let $\uniGPC^{(k)}_{\alpha_k} (\xi_k)$ denote the order-${\alpha_k}$ univariate basis of the $k$-th random parameter $\xi_k$, 
the multivariate basis is constructed via taking the product of univariate orthornormal polynomial basis:
\begin{equation}
    \multiGPC_{\basisInd}(\vecpar)   =  \prod_{k=1}^d \uniGPC^{(k)}_{\alpha_k} (\xi_k). 
\end{equation}
Therefore, given the joint probability density function $\rho(\vecpar)$, the multivariate basis satisfies the orthornormal condition:
\begin{equation}
    \langle \multiGPC_{\basisInd}(\vecpar), \multiGPC_{\boldsymbol{\beta}}(\vecpar) \rangle= \int _{\mathbb{R}^d} \multiGPC_{\basisInd}(\vecpar) \multiGPC_{\boldsymbol{\beta}}(\vecpar) \rho(\vecpar) d \vecpar=\delta_{\basisInd,\boldsymbol{\beta}}.
\end{equation}
The detailed formulation and construction of univariate basis functions can be found in~\cite{xiu2003modeling,gautschi1982generating}.

In order to estimate the unknown coefficients ${c}_{\basisInd}$'s, several popular methods can be used, including intrusive (i.e., non-sampling) methods (e.g., stochastic Galerkin~\cite{ghanem1991stochastic} and stochastic testing~\cite{zhang2013stochastic}) and non-intrusive (i.e., sampling) methods (e.g.,  stochastic collocation based on pseudo-projection or regression~\cite{Xiu2016sc}). 
It is well known that gPC expansion suffers the curse of dimensionality. The computational cost grows exponentially as the dimension of $\vecpar$ increases. 

\subsection{Tensor and Tensor Decomposition}
Given two tensors $\ten{X}$ and $\ten{Y} \in \mathbb{R}^{n_1 \times n_2 \cdots \times n_d}$, their inner product is defined as:
\begin{equation}\label{eq:inner_pro}
    \langle \ten{X}, \ten{Y} \rangle := \sum_{i_1\cdots i_d} x_{i_1\cdots i_d} y_{i_1\cdots i_d}.
\end{equation}
A tensor $\ten{X}$ can be unfolded into a matrix along the $k$-th mode/dimension, denoted as ${\text{Unfold}_k}(\ten{X}):={\mat{X}_{(k)}} \in \mathbb{R}^{{n_k}\times {{n_1}\cdots{n_{k-1}}{n_{k+1}}\cdots{n_{d}} }}$.  
Conversely, folding the $k$-mode matrization back to the original tensor is denoted as ${\text{Fold}_k}({\mat{X}_{(k)}}):=\ten{X}$.

Given a $d$-dim tensor, it can be factorized as a summation some rank-1 vectors, which is called CANDECOMP/PARAFAC (CP) decomposition~\cite{kolda2009tensor}:
\begin{equation}\label{eq:cp}
    \ten{X}=\sum_{r=1}^R \mat{a}_r^{(1)} \circ \mat{a}_r^{(2)} \cdots \circ \mat{a}_r^{(d)} = [\![\mat{A}^{(1)}, \mat{A}^{(2)}, \ldots, \mat{A}^{(d)}]\!],
\end{equation}
where $\circ $ denotes the outer product. 
The last term is the Krusal form, where factor matrix $\mat{A}^{(k)} = \left[\mat{a}_1^{(k)}, \ldots, \mat{a}_R^{(k)} \right] \in \mathbb{R}^{n_k \times R}$ includes all vectors associated with the $k$-th dimension. The smallest number of $R$ that ensures the above equality is called a CP rank. 
The $k$-th mode unfolding matrix ${\mat{X}_{(k)}}$ can be written with CP factors as 
\begin{equation}\label{eq:minus_KR_product}
\begin{aligned}
    {\mat{X}_{(k)}} =& \mat{A}^{(k)} {\mat{A}^{(\setminus k)}}^T \; \text{with}\\
    \mat{A}^{(\setminus k)} = &\mat{A}^{(d)} \odot \cdots \odot \mat{A}^{(k-1)} \odot \mat{A}^{(k+1)} \cdots \odot \mat{A}^{(1)},
\end{aligned}
\end{equation}
where $\odot$ denotes the Khatri-Rao product, which performs column-wise Kronecker products~\cite{kolda2009tensor}.
More details of tensor operations can be found in~\cite{kolda2009tensor}.


\section{Proposed Tensor Regression method}
\label{sec:proposed_method}
\subsection{Low-Rank Tensor Regression Formulation}
To approximate $y(\vecpar)$ as a tensor regression model, we choose a full tensor-product index set for the gPC expansion:
\begin{equation}
\label{eq:indesSet}
    \Theta=\left\{ \basisInd=[\alpha_1, \alpha_2, \cdots, \alpha_d]\; |\; 0\leq \alpha_k \leq p , \forall k\in [1,d]\right\}.
\end{equation}
This specifies a gPC expansion with ${(p+1)^d}$ basis functions. 
Let $i_k=\alpha_k+1$, then we can define two $d$-dimensional tensors $\ten{X}$ and $\ten{B}(\vecpar)$ with their $(i_1, i_2, \cdots i_d)$-th elements as
\begin{align}
\label{eq:tensorDef}
x_{i_1 i_2 \cdots i_d}=c_{\basisInd} \; {\text{and}}\; b_{i_1 i_2 \cdots i_d}(\vecpar)=\multiGPC_{\basisInd}(\vecpar).
\end{align}
Combining Eqs~\eqref{eq:Multi_expansion},~\eqref{eq:indesSet} and~\eqref{eq:tensorDef}, the truncated gPC expansion can be written as a tensor inner product
\begin{equation}\label{eq:constructed_surrogate}
   y(\vecpar) \approx \hat{y}(\vecpar) = \langle \ten{X}, \ten{B}(\vecpar) \rangle.
\end{equation}
The tensor $\ten{B}(\vecpar) \in \R^{(p+1)\times \cdots \times (p+1)}$ is a rank-1 tensor that can be exactly represented as: 
\begin{equation}
   \ten{B(\vecpar)} =  \boldsymbol{\uniGPC}^{(1)} (\xi_1) \circ \boldsymbol{\uniGPC}^{(2)} (\xi_2) \circ \cdots \circ \boldsymbol{\uniGPC}^{(d)} (\xi_d),
\end{equation}
where $\boldsymbol{\uniGPC}^{(k)} (\xi_k)=[{\uniGPC}_0^{(k)} (\xi_k), \cdots, {\uniGPC}_p^{(k)} (\xi_k)]^T \in \mathbb{R}^{p+1}$ collects all univariate basis functions of random parameter $\xi_k$ up to order-$p$.

The unknown coefficient tensor $\ten{X}$ has ${(p+1)^d}$ variables in total, but we can describe it via a rank-$R$ CP approximation:
\begin{equation}\label{eq:coe_ten}
    \ten{X} \approx \sum_{r=1}^R \mat{u}_r^{(1)} \circ \mat{u}_r^{(2)} \circ \cdots \circ \mat{u}_r^{(d)} = [\![\mat{U}^{(1)}, \mat{U}^{(2)}, \ldots, \mat{U}^{(d)}]\!].
\end{equation}
It decreases the number of unknown variables to $(p+1)dR$, which only linearly depends on $d$ and thus effectively overcomes the curse of dimensionality. 

Our goal is to compute coefficient tensor $\ten{X}$ given a set of data samples $\left \{\vecpar_n, y(\vecpar_n)\right \}_{n=1}^N$ via solving the following optimization problem
\begin{equation}
\label{eq:reg_h}
\resizebox{\columnwidth}{!}{
    $\min \limits_{\{ \mat{U}^{(k)}\}_{k=1}^d} h(\ten{X})= \frac{1}{2} \sum \limits_{n=1}^N \left( y_n-\langle [\![\mat{U}^{(1)}, \mat{U}^{(2)}, \ldots, \mat{U}^{(d)}]\!], \ten{B}^n \rangle \right )^2,$
    }
\end{equation}
where $y_n=y(\vecpar_n)$, $\ten{B}^n=\ten{B}(\vecpar_n)$, and $\vecpar_n$ denotes the $n$-th sample. 

\subsection{Automatic Rank Determination}
\begin{figure}[t]
    \centering
    \includegraphics[width=1\columnwidth]{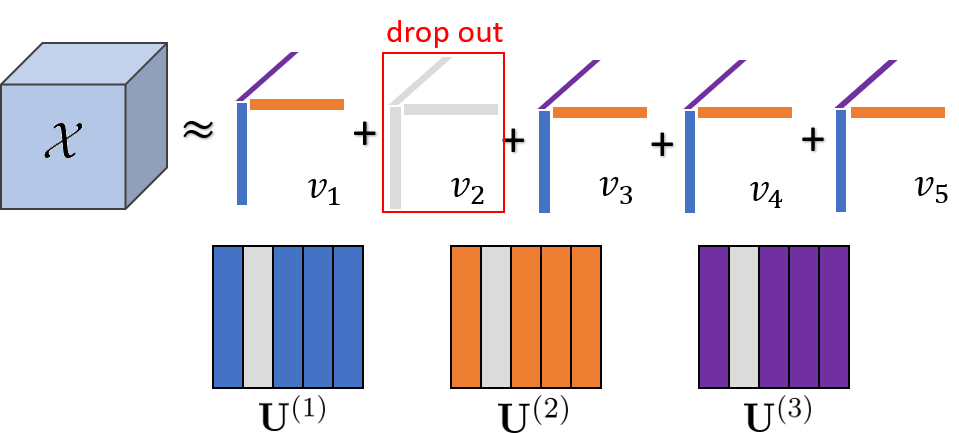}
    \caption{Visualization of the tensor rank determination. Here the gray vectors denote some shrinking tensor factors that can be removed from a CP decomposition.}
    \label{fig:rank_determination}
\end{figure}
The low-rank approximation~\reff{eq:coe_ten} assumes that $\ten{X}$ can be well approximated by $R$ rank-$1$ terms. In practice, it is hard to determine $R$ in advance. 
In this work, we leverage a group-sparsity regularization function to shrink the tensor rank from an initial estimation. Specifically, define the following vector:
\begin{equation}\label{eq:v_in_lq}
\resizebox{\columnwidth}{!}{
   $\mat{v}:=[v_1, v_2, \cdots, v_R]\, {\rm with}\; v_r= \left( \sum \limits_{k=1}^d \| \mat{u}_{r}^{(k)} \|_2^2 \right)^{\frac{1}{2}} \forall r\in [1,R].$
   }
\end{equation}
We further use its $\ell_{q}$ norm with $q \in \left( 0, 1\right]$ to measure the sparsity of $\mat{v}$: 
\begin{equation}
\label{eq:rank_penalty}
   g(\ten{X}) = \|\mat{v}\|_q, 
   \quad q \in \left( 0, 1\right].
\end{equation}
This function groups all rank-$1$ term factors together and enforces the sparsity among $R$ groups. The rank is reduced when the $r$-th columns of all factor matrices are enforced to zero. When $q=1$, this method degenerates to a group lasso, and a smaller $q$ leads to a stronger shrinkage force. 

Based on this rank-shrinkage function, we compute the tensor-structured gPC coefficients by solving a regularized tensor regression problem:
\begin{equation}\label{eq:tensor_regression}
\begin{aligned}
        \min_{\{\mat{U}^{(k)}\}_{k=1}^d} f(\ten{X}) = &  h(\ten{X})
        +  \lambda g(\ten{X}),
\end{aligned}
\end{equation}
where $\lambda>0$ is a regularization parameter.
As shown in Fig.~\ref{fig:rank_determination}, after solving this optimization problem, some columns with the same column indices among all matrices $\mat{U}^{(k)}$'s are close to zero. These columns can be deleted and the actual rank of our obtained tensor becomes $\hat{R}\leq R$, where $\hat{R}$ is the number of remaining columns in each factor matrix. 
\subsection{A More Tractable Regularization}
It is non-trivial to minimize $f(\ten{X})$
since $g(\ten{X})$ is non-differentiable and non-convex with respect to $\mat{U}^{(k)}$'s. 
Therefore, we replace the regularization function with a more tractable one based on the following variational equality. 
\begin{lemma}[Variational equality~\cite{jenatton2010structured}] 
\label{lemma:variational}
Let $\alpha \in (0, 2]$, and $\beta = \frac{\alpha}{2-\alpha}$. For any vector $\mat{y}\in \R^{p}$, we have the following equality 
\begin{equation}\label{eq:lemma_variantional}
    \|\mat{y}\|_\alpha = \min_{\boldsymbol{\eta} \in \R_{+}^p} \frac{1}{2}\sum_{r=1}^p \frac{y^2_r}{\eta_r} + \frac{1}{2}\|\boldsymbol{\eta}\|_\beta,
\end{equation}
where the minimum is uniquely attained for $\eta_r = {|y_r|}^{2-\alpha} \|\mat{y}\|_{\alpha}^{\alpha-1}, r=1,2,\ldots,p.$
\end{lemma}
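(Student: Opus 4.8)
The plan is to treat the right-hand side of \eqref{eq:lemma_variantional} as a minimization over $\boldsymbol{\eta}\in\R_+^p$ and to establish the identity by matching an upper bound (achievability) with a lower bound, rather than by a direct first-order analysis. A purely calculus-based argument is tempting but awkward here: for $\alpha\in(0,1)$ one has $\beta<1$, so $\|\boldsymbol{\eta}\|_\beta$ is only a quasi-norm and the objective is not jointly convex, which makes it hard to certify a critical point as the global minimum. I would therefore avoid leaning on convexity and prove the two inequalities directly, which has the advantage of working uniformly for every $\alpha\in(0,2]$.

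First I would verify achievability by substituting the candidate $\eta_r=|y_r|^{2-\alpha}\|\mat{y}\|_\alpha^{\alpha-1}$ into the objective. Using $\beta=\frac{\alpha}{2-\alpha}$ (so that $(2-\alpha)\beta=\alpha$ and $\alpha/\beta=2-\alpha$), a short computation shows that the data-fidelity term $\frac{1}{2}\sum_r\frac{y_r^2}{\eta_r}$ collapses to $\frac{1}{2}\|\mat{y}\|_\alpha$ and, separately, that $\|\boldsymbol{\eta}\|_\beta=\|\mat{y}\|_\alpha$, so the penalty term also equals $\frac{1}{2}\|\mat{y}\|_\alpha$. Adding the two gives exactly $\|\mat{y}\|_\alpha$, so the minimum is at most $\|\mat{y}\|_\alpha$.

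The core of the argument is the matching lower bound. Writing $A:=\sum_r\frac{y_r^2}{\eta_r}$ and $B:=\|\boldsymbol{\eta}\|_\beta$, the elementary inequality $\frac{1}{2}A+\frac{1}{2}B\ge\sqrt{AB}$ reduces the claim to showing $AB\ge\|\mat{y}\|_\alpha^2$ for every $\boldsymbol{\eta}\in\R_+^p$. This is where H\"older's inequality does the work. I would factor $|y_r|^\alpha=\left(\frac{y_r^2}{\eta_r}\right)^{\alpha/2}\eta_r^{\alpha/2}$ and apply H\"older with the conjugate exponents $s=2/\alpha$ and $s'=2/(2-\alpha)$ (both $\ge 1$ precisely because $\alpha\in(0,2]$); the chosen exponents are what make the first factor reproduce $A^{\alpha/2}$ and the second factor reproduce $\left(\sum_r\eta_r^\beta\right)^{(2-\alpha)/2}$, since $s'\alpha/2=\beta$. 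Raising the resulting bound to the power $2/\alpha$ and using $(2-\alpha)/\alpha=1/\beta$ yields $\|\mat{y}\|_\alpha^2\le A\cdot B$, as required. Finding the right factorization together with the matching exponents is the only genuinely nontrivial step; everything else is bookkeeping.

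Finally, for uniqueness I would track the two equality cases. Equality in the AM-GM step forces $A=B$, and equality in H\"older forces the proportionality $\frac{y_r^2}{\eta_r}\propto\eta_r^\beta$ for all $r$, i.e. $\eta_r\propto|y_r|^{2-\alpha}$ (with $\eta_r=0$ whenever $y_r=0$, under the convention $0/0=0$). These two conditions together fix the proportionality constant and recover the stated minimizer $\eta_r=|y_r|^{2-\alpha}\|\mat{y}\|_\alpha^{\alpha-1}$, establishing that it is the unique optimum.
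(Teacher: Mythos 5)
Your proof is correct, but it takes a genuinely different route from the paper's. The paper argues by calculus: for $\alpha\in(0,2)$ it observes that the objective is smooth on the open orthant and blows up as any $\eta_r\to 0$ or $\eta_r\to\infty$ (when $y_r\neq 0$), so the infimum is attained, and it then solves the first-order stationarity condition in closed form --- since that stationary point is unique, attainment plus stationarity identifies it as the global minimizer, nonconvexity notwithstanding; the boundary case $\alpha=2$ is handled separately via the scalar identity $y_r=\frac{y_r^2}{2\eta_r}+\frac{\eta_r}{2}$ iff $\eta_r=y_r$. Your sandwich argument replaces all of this: substituting $\eta_r=|y_r|^{2-\alpha}\|\mat{y}\|_\alpha^{\alpha-1}$ gives the upper bound, and AM--GM plus H\"older with conjugate exponents $2/\alpha$ and $2/(2-\alpha)$ applied to the factorization $|y_r|^\alpha=\left(y_r^2/\eta_r\right)^{\alpha/2}\eta_r^{\alpha/2}$ gives the matching lower bound $AB\ge\|\mat{y}\|_\alpha^2$; I checked the exponent bookkeeping ($(2-\alpha)\beta=\alpha$, $\alpha/\beta=2-\alpha$, $s'\alpha/2=\beta$) and the equality-case analysis recovering $\eta_r\propto|y_r|^{2-\alpha}$ with constant $\|\mat{y}\|_\alpha^{\alpha-1}$, and they are all sound. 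What your route buys: stationarity is never invoked, so the nonconvexity of the objective for $\beta<1$ is simply irrelevant, and $\alpha=2$ is covered uniformly under the natural conventions $\beta=\infty$, $\|\boldsymbol{\eta}\|_\infty=\max_r\eta_r$ --- which is arguably cleaner than the paper's separate case, whose displayed verification in fact checks the $\alpha=1$, $\beta=1$ identity (it writes $\|\boldsymbol{\eta}\|_1$ where $\alpha=2$ calls for $\|\boldsymbol{\eta}\|_\infty$). What the paper's route buys: the stationarity computation is shorter and derives the minimizer rather than verifying a guessed one, and its attainment-plus-unique-critical-point logic is valid, so your stated objection to the calculus approach is a matter of taste rather than a genuine flaw in the paper. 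Two small points to make explicit in your write-up: state the conventions $0/0=0$ and $0^0=1$ for zero components, and note that at $\alpha=2$ the H\"older pair degenerates to $(1,\infty)$, where equality only forces $\eta_r=\|\mat{y}\|_2$ on coordinates with $y_r\neq 0$, so strict uniqueness there needs all $y_r\neq 0$ --- a boundary subtlety inherited from the lemma statement itself, and immaterial to the paper, which only uses $\alpha=q\in(0,1]$.
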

\begin{proof}
See Appendix~\ref{sec:variational_ineq}.
\end{proof}
If we take $p=R$, $\alpha=q$, and ${y_r} = {v_r} $ (defined in $\eqref{eq:v_in_lq}$), on the right-hand side of Eq.~\reff{eq:lemma_variantional}, then we have 
\begin{equation}
        \hat{g}(\ten{X},\boldsymbol{\eta}) = \frac{1}{2}\sum \limits_{r=1}^R \frac{v_r^2}{{\eta}_r} + \frac{1}{2}\|\boldsymbol{\eta}\|_{\frac{q}{2-q}}.
\end{equation}
The original rank-shrinking function \eqref{eq:rank_penalty} is equivalent to
\begin{align}
\label{eq:subproblem_eta}
     g(\ten{X})  = \min_{\boldsymbol{\eta} \in \R_{+}^R} \hat{g}(\ten{X},\boldsymbol{\eta}).
\end{align}
As a result, we solve the following optimization problem as an alternative to~\eqref{eq:tensor_regression}:
\begin{equation}
    \label{eq:tensor_regression_bound}
\begin{aligned}
        \min \limits_{\{\mat{U}^{(k)}\}_{k=1}^d, \vect{\eta}} \hat{f}(\ten{X}) = & h(\ten{X}) + \lambda \hat{g}(\ten{X},\vect{\eta}).
\end{aligned}
\end{equation}


\subsection{A Block Coordinate Descent Solver for Problem~\reff{eq:tensor_regression_bound}}
Now we present an alternating minimization solver for Problem~\reff{eq:tensor_regression_bound}. Specifically, we decompose Problem~\reff{eq:tensor_regression_bound} into $(d+1)$ sub-problems with respect to $\{\mat{U}^{(k)}\}_{k=1}^d$ and $\vect{\eta}$, and we can obtain the analytical solution to each sub-problem. 
\begin{itemize}
 \item 
 \textbf{$\mat{U}^{(k)}$-subproblem:} By fixing $\vect{\eta}$ and all tensor factors except $\mat{U}^{(k)}$,
we can see that the variational equality induces a convex subproblem.
Based on the first-order optimality condition, $\mat{U}^{(k)}$ can be updated analytically:
\begin{equation}\label{eq:sol_sub_U}
    \text{vec}(\mat{U}^{(k)}) = {(\mat{\Phi}^T\mat{\Phi} + \lambda \tilde{\mat{\Lambda}})}^{-1} \mat{\Phi}^T \vect{y},
\end{equation}
where $\mat{\Phi} = {[\vect{\Phi}_1, \cdots, \vect{\Phi}_N]}^T$ $\in \R^{N \times R(p+1)}$ with rows  $\vect{\Phi}_n^T = \text{vec}\left({\mat{B}}_{(k)}^n \mat{U}^{(\setminus k)}\right)^T $ for any $ n\in [1,N] $, $\mat{\tilde{\Lambda}} = \text{diag}(\frac{1}{{\eta}_1},\ldots,\frac{1}{{\eta}_{R}}) \	  \otimes \mat{I} \in \R^{R(p+1) \times R(p+1)}$, and $\mat{y} \in \R^N$ is a collection of output simulation samples. Here $\otimes$ denotes a Kronecker product, $\mat{U}^{(\setminus k)}$ is a series of Khatri-Rao product defined as Eq.~\reff{eq:minus_KR_product}, and $\mat{B}_{(k)}^n$ is the $k$-th mode matrization of the tensor $\ten{B}^n=\ten{B}(\vecpar_n)$. 
For simplicity, we leave the derivation of Eq.~\reff{eq:sol_sub_U} to Appendix~\ref{sec:BCD_solution}. 
 \item
\textbf{$\boldsymbol{\eta}$-subproblem:} Suppose that $\{\mat{U}^{(k)}\}_{k=1}^d$ are fixed, the formulation of the $\boldsymbol{\eta}$-subproblem is shown in \eqref{eq:subproblem_eta}. According to lemma~\ref{lemma:variational}, we update $\vect{\eta}$ as 
\begin{equation}\label{eq:sol_sub_eta}
    {\eta}_r = ({{v}_r})^{2-q}\| \mat{v} \|_{q}^{q-1} + \epsilon,
\end{equation}
where $\epsilon > 0$ is a small scalar to avoid numerical issues. Suppose that the tensor rank is reduced in the optimization, \ie, $\mat{u}_{r}^{(k)} = \mat{0},~\forall k \in [1,d]$, then we can see that ${\eta_r}$ will become zero without $\epsilon$.
\end{itemize}

\subsection{Discussions}
We would like to highlight a few key points in practical implementations. 
\begin{itemize}[leftmargin=*]
\item The solution depends on the initialization process. In the first iteration of updating the $k$-th factor matrices, we suggest the following initialization 
\begin{equation}\label{eq:ini_of_U}
\begin{aligned}
  &\mat{\Phi}  = {[\vect{\Phi}_1, \vect{\Phi}_2, \ldots \vect{\Phi}_N ]}^T \; \text{with} \\
  &\vect{\Phi}_n = \text{vec}(\mat{O}_{n,k} )^T, \forall n \in [1,N], \\
  &\mat{O}_{n,k} =  
 \left[\vect{\uniGPC}^{(k)}(\xi_k^n), \ldots, \vect{\uniGPC}^{(k)}(\xi_k^n)\right]\in \R^{(p+1) \times R}, 
\end{aligned}
\end{equation}
where $\xi_k^n$ is the $k$-th variable of sample $\vecpar_n$, $\vect{\uniGPC}^{(k)}(\xi_k^n) \in \mathbb{R}^{p+1}$ collects all univariate basis functions of  $\xi_k^n$ up to degree $p$, and $\mat{O}_{n,k}$ stores $R$ copies of $\vect{\uniGPC}^{(k)}(\xi_k^n)$. 
Besides, in the first iteration without adaptive sampling, we set $\boldsymbol{\eta}$ as an all-ones vector multiplied by a scalar factor since we do not have a good initial guess for $\{{\mat{U}^{(k)}}\}_{k=1}^d$. The value of the scalar factor does not influence a lot once it makes Eq.~\eqref{eq:sol_sub_U} numerically stable.
In an adaptive sampling setting (see Section~\ref{sec:adap_sampling}), we need to solve~\reff{eq:tensor_regression_bound} after adding new samples. In this case, we use a warm-up initialization by setting the initial guess of $\{{\mat{U}^{(k)}}\}_{k=1}^d$ as the solution obtained based on the last-round sampling, and therefore can initialize $\boldsymbol{\eta}$ via Eq.~\eqref{eq:sol_sub_eta}. 

\item The regularization parameter $\lambda$ is highly related to the force of rank shrinkage. To adaptively balance the empirical loss and  the rank shrinkage term, we suggest an iterative update of the parameter 
\begin{equation}\label{eq:update_lambda}
    \lambda = \lambda_0 \max(\boldsymbol{\eta}), 
\end{equation}
where $\lambda_0$ is chosen via cross validation. 
\item We stop the block coordinate descent solver for problem~\eqref{eq:tensor_regression_bound} when the update of factor matrices $\{{\mat{U}^{(k)}}\}_{k=1}^d$ is below a predefined threshold, or the algorithm reaches a predefined maximal number of iterations.
\end{itemize}

The overall algorithm, including an adaptive sampling which will be introduced in Section~\ref{sec:adap_sampling},  is summarized in Alg.~\ref{alg:ten_reg}.
\begin{algorithm}[t]
  \caption{Overall Adaptive Tensor Regression}
  \label{alg:ten_reg}
    \KwIn{Initial sample pairs $\left \{\vecpar_n, y(\vecpar_n)\right \}_{n=1}^N$, unitary polynomial order $p$, initial tensor rank $R$}
    \KwOut{Constructed surrogate model [Eq.~\reff{eq:builded_surrogate}]}
    \While{Adaptive sampling does not stop}{
    Construct the basis tensor $\ten{B}(\vecpar)$\\
    
    \eIf{No additional samples }
     {Initialize with Eq.~\reff{eq:ini_of_U}}
    {Initialize  $\{{\mat{U}^{(k)}}\}_{k=1}^d$ with the last solution}
    \While{Tensor regression does not stop}
    {
     \For{$k =1,2, \ldots ,d$}
        {
        update ${\mat{U}^{(k)}}$ via Eq.~\reff{eq:sol_sub_U}\\
        }
        Update $\vect{\eta}$ via 
        Eq.~\reff{eq:sol_sub_eta}\\
        Update regularization parameter $\lambda$ via Eq.~\reff{eq:update_lambda}
    }
    
    Shrink the tensor rank to $\hat{R}$ if possible\\
    Select new sample pairs based on Alg.~\ref{alg:adaptive_sampling}
    }
\end{algorithm}
After solving the factor matrices ${\{\mat{U}^{(k)}\}_{k=1}^d}$, \ie~the coefficient tensor $\ten{X}$, 
the surrogate on a sample $\vecpar$ can be efficiently calculated as
\begin{equation}\label{eq:builded_surrogate}
   \hat{y}(\vecpar) =  \langle \ten{X}, \ten{B}(\vecpar) \rangle= \sum_{r=1}^R \prod_{k=1}^d \left [{\boldsymbol{\phi}^{(k)}(\xi_k)}\right ]^T \mat{u}_{r}^{(k)}.
\end{equation}


In this work, tensor $\ten{X}$ is approximated by a low-rank CP decomposition. It is also possible to use other kinds of tensor decompositions. In those cases, although the tensor ranks are defined in different ways, the idea of enforcing group-sparsity over tensor factors still works. It is also worth noting that \eqref{eq:tensor_regression_bound} can be seen as a generalization of weighted group lasso. To further exploit the sparsity structure of the gPC coefficients, many variants can be developed from the statistic regression perspective, including the sparse group lasso, tensor-structured Elastic-Net regression, and so forth~\cite{hastie2015statistical}. 

\section{Adaptive Sampling Approach}
\label{sec:adap_sampling}
Another fundamental question in uncertainty quantification is how to select the parameter samples $\vecpar$ for simulation. 
We aim to reduce the simulation cost by selecting only a few informative samples for the detailed device- or circuit-level simulations.

Given a set of initial samples ${\Theta}$, we design a two-stage method to balance the exploration and exploitation in our active sampling process. In the first stage, we estimate the volume of some Voronoi cells via a Monte Carlo method to measure the sampling density in each region. In the second stage, we roughly measure the nonlinearity of $y(\vecpar)$ at some candidate samples via a Taylor expansion. We choose new samples that are located in a low-density region and make $y(\vecpar)$ highly nonlinear. In our implementation, the initial samples ${\Theta}  = \left \{\vecpar_n, y(\vecpar_n) \right \}_{n=1}^N $ are generated by the Latin Hypercube (LH) sampling method~\cite{mckay2000comparison}. Specifically, we first generate some standard LH samples $\{\boldsymbol{\zeta}^{\rm LH}_n\}_{n=1}^N$ in a hyper cube ${[0,1]}^d$,
then we transform them to the practical parameter space $ {\Omega}$ via the inverse transforms of the cumulative distribution function. 
Generally, the initial sample size of $\Theta$ depends on the number of unknowns in the model. Since problem~\eqref{eq:tensor_regression_bound} is regularized and solved via an alternating solver, given a limited simulation budget, we set the initial size $N$ to be smaller than the number of unknowns in our examples.

\subsection{Exploration: Volume Estimation of Voronoi Cells}
\label{sec:explore_sampling}
\begin{figure}[t]
    \centering
    \includegraphics[width=1\columnwidth]{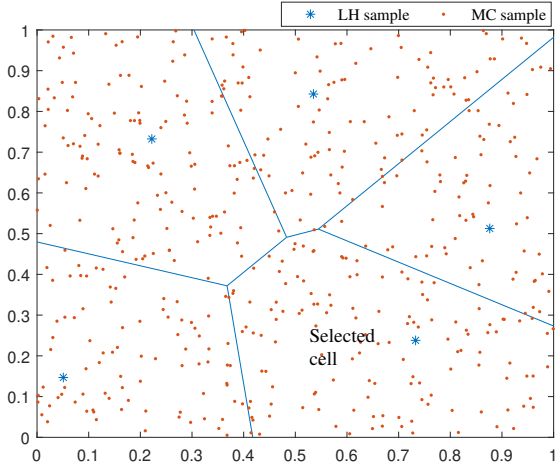}
    \caption{An example of Voronoi diagram on ${[0,1]}^2$. Each LH sample is a Voronoi cell center. The lower right cell should be selected in the first-stage since it has the largest estimated area (volume).}
    \label{fig:V_diagram}
\end{figure}   
Firstly, we employ an exploration step via a space-filling sequential design. Given the existing sample set $\Theta$, the sample density in $\Omega$ can be estimated via a Voronoi diagram~\cite{aurenhammer1991voronoi}. 
Specifically, each sample ${\vecpar_n}$ corresponds to a Voronoi cell $C_n \in {\Omega}$ that contains all the samples that lie closely to $\vecpar_n$ than other samples in ${\Omega}$. The Voronoi diagram is a complete set of cells that tesselate the whole sampling space. The volume of a cell reflects its sample density: a larger volume means that the cell region is less sampled.   

Here we provide a formal description of the Voronoi cell. Given two distinct samples $\vecpar_i, \vecpar_j \in \Omega$, there always exist a half-plane $\hp(\vecpar_i,\vecpar_j)$ that contains all samples that are at least as close to $\vecpar_i$ as to $\vecpar_j$
\begin{equation}
    \hp(\vecpar_i,\vecpar_j)=\{\vecpar \in \R^d | \; \|\vecpar-\vecpar_i\| \le \|\vecpar-\vecpar_j\|\}.
\end{equation}
The Voronoi cell $C_i$ is defined as the space that lie in the intersection of all half-plane $\hp(\vecpar_i,\vecpar_j), \forall \vecpar_j\in \Omega \setminus \vecpar_i$: 
\begin{equation}
         C_i = \bigcap_{\vecpar_j\in \Omega \setminus \vecpar_i} \hp(\vecpar_i,\vecpar_j).
\end{equation}  
It is intractable to construct a precise Voronoi diagram and calculate the volume exactly in a high-dimensional space. Fortunately, we do not need to construct the exact Voronoi diagram. Instead, we only need to estimate the volume in order to measure the sample density in that cell. This can be done via a Monte Carlo method.
\begin{observation}\label{obs:distance}
In order to detect the least-density region in $\Omega$, we can either estimate the density of $\Omega$ directly or estimate the density of hyper-cube  ${[0,1]}^d$ and then transform it to $\Omega$.
In the Monte-Carlo-based density estimation, it is fairer to choose the latter one.

\end{observation}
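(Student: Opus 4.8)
The plan is to make precise what ``fairer'' should mean and then reduce the claim to a single change-of-variables identity. I read the observation as saying that the exploration criterion ought to measure how under-sampled a region is \emph{relative to the probability measure} $\rho(\vecpar)$ that defines the uncertainty quantification problem, rather than relative to the bare Euclidean geometry of $\Omega$. Accordingly, I would take as the target quantity the probability mass $\P(C)=\int_C \rho(\vecpar)\,d\vecpar$ carried by a Voronoi region $C$, and argue that the cube-based Monte Carlo estimates exactly this, whereas the direct estimate in $\Omega$ does not.

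The key step is the pushforward identity for the inverse-CDF map $T:[0,1]^d\to\Omega$, $\boldsymbol{\zeta}\mapsto\vecpar$, whose inverse $T^{-1}$ applies the marginal CDFs componentwise. Because the components of $\vecpar$ are mutually independent, the Jacobian of $T^{-1}$ factorizes as $\prod_{k=1}^d \rho_k(\xi_k)=\rho(\vecpar)$, so for any measurable $A\subseteq[0,1]^d$ the change of variables gives $\mathrm{Vol}(A)=\int_A d\boldsymbol{\zeta}=\int_{T(A)}\rho(\vecpar)\,d\vecpar=\P\big(T(A)\big)$; that is, Lebesgue volume in the cube equals probability mass in $\Omega$. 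Now build the Voronoi tessellation $\{\tilde C_n\}$ in the cube at the transformed samples $\boldsymbol{\zeta}_n=T^{-1}(\vecpar_n)$, throw $M$ points uniformly in $[0,1]^d$, and let $\hat V_n$ be the fraction falling in $\tilde C_n$. Then $\E[\hat V_n]=\mathrm{Vol}(\tilde C_n)=\P(T(\tilde C_n))$ and $\hat V_n\to\mathrm{Vol}(\tilde C_n)$ almost surely, so the estimator is unbiased and consistent, and every cell is resolved with comparable accuracy because the darts are spread uniformly. Selecting the largest $\hat V_n$ therefore selects the region carrying the most probability mass around a single sample, the genuinely under-sampled region in the statistically meaningful sense, which is precisely the notion of fairness we adopted.

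Finally I would contrast the direct estimate in $\Omega$. Estimating Lebesgue volume there requires uniform Monte Carlo over $\Omega$, which is ill-defined when $\Omega$ is unbounded (e.g.\ Gaussian parameters with $\Omega=\R^d$); and even after an artificial truncation the $\Omega$-volume is dominated by the tails, where $\rho$ is small and cells are huge, so tail cells would always be flagged as least-dense and repeatedly refined despite carrying negligible probability. If instead one draws the Monte Carlo points from $\rho$ in $\Omega$, the counts estimate $\P(C_n)$ but with highly non-uniform variance, since almost no points land in the large low-density cells whose size one is trying to assess. The cube formulation removes both defects at once. I expect the main obstacle to be conceptual rather than computational: the nonlinear map $T$ does not send $\Omega$-Voronoi cells to cube-Voronoi cells, so one cannot argue by a cell-by-cell correspondence and must instead reason at the level of the two measures (volume versus probability) and of the two Monte Carlo sampling distributions, exactly as the pushforward identity above permits.
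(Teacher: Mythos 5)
Your proposal is correct, but it is a genuinely different and substantially more formal route than what the paper offers. The paper does not prove Observation~\ref{obs:distance} at all in your sense: its entire justification (Appendix~\ref{sec:distance_example}) is a one-dimensional numerical example --- two samples $0.2$ and $0.6$ in $[0,1]$ and a candidate $0.4$ that is equidistant from both in the cube, but after the inverse-CDF transform to a standard Gaussian ($-0.8416$, $0.2533$, candidate $-0.2533$) becomes strictly closer to the sample sitting in the higher-density region. That example makes the distance-distortion point vividly but proves nothing general. You instead formalize ``fairer'' as ``measuring probability mass $\P(C)=\int_C \rho(\vecpar)\,d\vecpar$ rather than Lebesgue volume of $\Omega$,'' and then use the pushforward identity ${\rm Vol}(A)=\P(T(A))$ for the componentwise inverse-CDF map $T$ (valid here by the paper's independence assumption), so that uniform Monte Carlo dart-throwing in $[0,1]^d$ is an unbiased, consistent estimator of exactly the right quantity --- which, as you note, also justifies the paper's actual implementation, since Algorithm~2 builds the Voronoi cells in the cube with the LH samples as centers. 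Your argument buys generality, a precise semantics for the informal word ``fairer,'' and the observation that the direct-in-$\Omega$ alternative is ill-posed for unbounded supports such as Gaussians; your explicit caveat that $T$ does not map $\Omega$-Voronoi cells to cube-Voronoi cells is a real subtlety the paper silently glosses over. One small imprecision: in your contrast case of drawing MC points from $\rho$ in $\Omega$, the counts are unbiased for $\P(C_n)$ with absolute error that is small precisely where $\P(C_n)$ is small, so ``non-uniform variance'' is not by itself the defect; the sharper objection is that the Euclidean Voronoi tessellation of $\Omega$ is itself the wrong geometry (the paper's example is exactly about this assignment distortion), and your cube formulation fixes the geometry and the measure simultaneously.
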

\begin{example}
One simple example is given in Appendix~\ref{sec:distance_example}. 
\end{example}

Based on the above observation, we estimate the volume of Voronoi cell in the hyper cube.
Let the existing LH samples $\{\boldsymbol{\zeta}^{\rm LH}_n\}_{n=1}^N$ be the cell centers $\{C_n\}_{n=1}^N$.
We first randomly generate $M$ Monte Carlo samples $\{ \boldsymbol{\psi}_m \}_{m=1}^M \in  {[0,1]}^d$. 
For each random sample, we calculate its Euclidean distance towards the cell centers and assign it to the closest one.
Then the volume of the cell ${{\rm vol}(C_n)}$ 
is estimated by counting the number of assigned random samples. 
The cell with the largest estimated volume is least-sampled. The MC samples assigned to this cell are denoted as set $\Gamma$. A simple example that illustrates the first-round search is shown in Fig.~\ref{fig:V_diagram}. 
After transforming all Monte Carlo samples in set $\Gamma$ 
back to the actual parameter space $\Omega$ via the inverse transform sampling method, we obtain a set of candidates for the next-stage selection, denoted as set $\Gamma_{\Omega}$. 

The accuracy of volume estimation depends on the number of random samples. Clearly, more Monte Carlo samples can estimate the volume more accurately, but they also induce more computational burden. 
As suggested by~\cite{crombecq2009space}, to achieve a good estimation accuracy, we use $M$ Monte Carlo samples with $M=100N$.

\subsection{Exploitation: Nonlinearity Measurement}
\label{sec:nonlinear_sampling}

In the second stage, we aim to do an exploitation search based on the obtained candidate sample set $\Gamma_{\Omega}$.
Based on the assumption that the region with a more nonlinear response is harder to capture, we choose the criterion in the second stage as the nonlinearity measure of the target function.
We know that the first-order Taylor expansion of a function becomes more inaccurate if that function is more nonlinear. Therefore, given a sample $\vecpar$, we measure the non-linearity of $y(\vecpar)$ via the difference of $y(\vecpar)$ and its first-order Taylor expansion around the closest Voronoi cell center $\mat{a} \in \Omega$~\cite{mo2017taylor}. We do not know exactly the expression of $y(\vecpar)$, but we have already built a surrogate model $\hat{y}(\vecpar)$ based on previous simulation samples. Therefore, the nonlinearity of $y(\vecpar)$ can be roughly estimate as
\begin{equation}\label{eq:nonlinear_measure}
    \gamma(\vecpar) = |\hat{y}(\vecpar) - \hat{y}(\mat{a}) - \nabla \hat{y}(\mat{a})^T(\vecpar-\mat{a}) |.
\end{equation}
Notice that the nonlinear measure does not imply the accuracy of the surrogate model since we do not use the simulation value here.
In the second stage, we will choose the sample $\vecpar^{\star}$ that has the largest $\gamma(\vecpar)$ from the candidate set of $\Gamma_{\Omega}$: 
\begin{equation}
\label{eq:max_nonlinear}
 \vecpar^{\star}=\argmax \limits_{\vecpar \in \Gamma_{\Omega}}  \left(\gamma \left( \vecpar \right) \right). 
\end{equation}

To summarize, we select
the most nonlinear sample from the least-sampled cell space, which is a good trade-off between exploration and exploitation. 
Based on the above, we summarize the adaptive sampling procedure in Alg.~\ref{alg:adaptive_sampling}.
\begin{algorithm}[t]
  \caption{Adaptive sampling procedure}
  \label{alg:adaptive_sampling}
  \KwIn{Initial samples pairs $\Theta = \left \{\vecpar_n, y(\vecpar_n)\right \}_{n=1}^N$}
  \KwOut{Sample pairs $\Theta^{\star}$ with the additional sample}
    Uniformly generate $M=100N$ Monte Carlo samples $\{\boldsymbol{\psi}_{m}\}_{m=1}^M \in {[0,1]}^d$\\
     \For{$m =1,2, \ldots,M$}{
     Find the closest cell $C_n$ center to ${\boldsymbol{\psi}_m}$\\
     ${\rm vol}(C_n) \leftarrow {\rm vol}(C_n) + 1$\\ 
     }
     Find the cell with the biggest estimated volume ${\rm vol}$ and the sample set $\Gamma$ assigned to this cell\\
     $\Gamma_{\Omega} \leftarrow  \text{Inverse\_transform\_sampling}(\Gamma)$ \\
     Calculate the nonlinearity measure $\gamma(\Gamma_{\Omega})$ via Eq.~\reff{eq:nonlinear_measure}\\
    
    Select $\vecpar^{\star} $ according to Eq.~\eqref{eq:max_nonlinear}\\
     
     $\Theta^{\star} \leftarrow \Theta \bigcup \{ \vecpar^{\star}, y(\vecpar^{\star})\}$
\end{algorithm}

\subsection{Discussion}
The proposed adaptive sampling method can be easily extended to a batch version by searching for the top-$K$ least-sampled regions in the first stage. We can stop sampling when we exceed a sampling budget or when the constructed surrogate model achieves the desired accuracy.

The sampling criteria do not rely on the structure of the targeted surrogate model. 
Therefore, the proposed sampling method is very flexible and generic.
The proposed method is very suitable for constructing a high-dimensional polynomial model due to two reasons.
Firstly, the number of samples required in estimating the Voronoi cell does not rely on the parameter dimensionality but on the number of existing samples. 
Secondly, the derivative and the nonlinearity of the surrogate model are easy to compute.

Some variants of the proposed sampling methods may be further developed.
For instance, we may define a score function as the combination of the estimated volume 
and the nonlinearity measure, and then calculate the score for each Monte Carlo sample and select the best one. In the batch version, we may also select several top nonlinear samples from the same Voronoi cell.




\section{Statistical Information Extraction}
\label{sec:post-processing}
Based on the obtained tensor regression model $\hat{y}(\vecpar) = \sum \limits_{\basisInd \in \Theta} {c}_{\basisInd}\multiGPC_{\basisInd}(\vecpar)=\langle \ten{X}, \ten{B}(\vecpar) \rangle$, we can easily extract important statistical information such as moments and Sobol' indices.

\begin{figure*}[t]
    \centering
    \includegraphics[width=2\columnwidth]{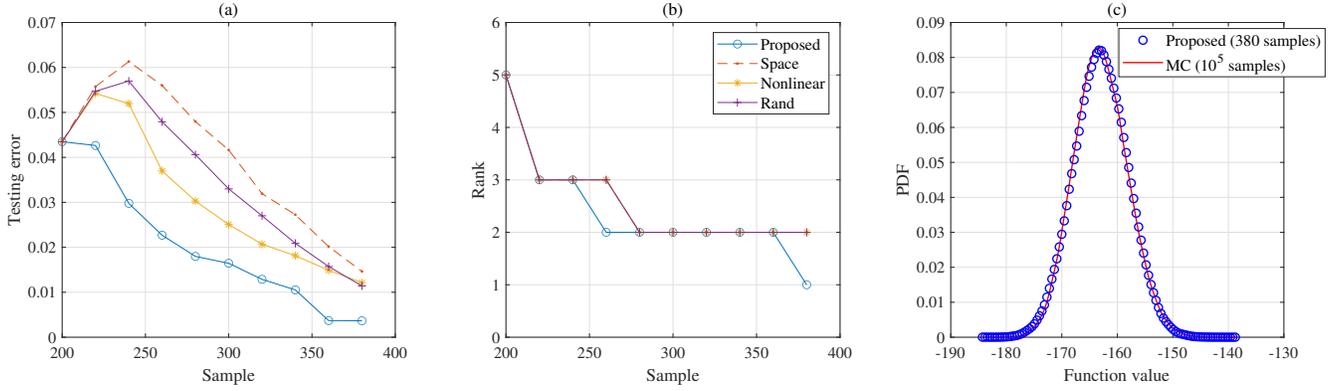}
    \caption{Results of approximating the synthetic function. (a) Testing error on $10^5$ MC samples. (b) The estimated rank. (c) Probability density functions of the function value. }
    \label{fig:syn_fun}
\end{figure*}
\begin{figure}[t]
    \centering
    \includegraphics[width=1\columnwidth]{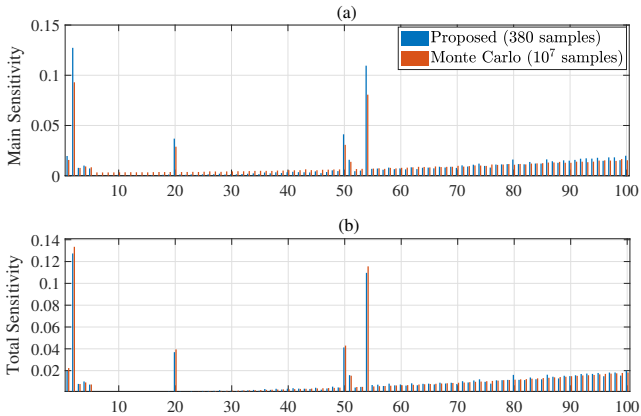}
    \caption{Sensitivity analysis of the synthetic function in \reff{eq:syn_function}. The proposed method fits the results from Monte Carlo~\cite{saltelli2010variance} with $10^7$ simulations very well.}
    \label{fig:sen_syn}
\end{figure}

\begin{itemize}[leftmargin=*]
    \item 
    \textbf{Moment information.} 
The mean $\mu$ of the constructed $\hat{y}(\cdot)$ is the coefficient of the zero-order basis $\multiGPC_{\mat{0}}(\vecpar)$:
\begin{equation}\label{eq:post_mean}
    \mu = c_{\mat{0}}=x_{1 1 \cdots 1} = \sum_{r=1}^R \vect{u}_r^{(1)}(1) \vect{u}_r^{(2)}(1) \cdots \vect{u}_r^{(d)}(1).
\end{equation}
$x_{1 1 \cdots 1}$ is the $(1,1,\cdots,1)$-th element of tensor $\ten{X}$, and $\mat{u}_r^{(k)} (1)$ denotes the first element of vector $\mat{u}_r^{(k)}$
The variance of $y(\vecpar)$ can be estimated as:
\begin{equation}\label{eq:post_var}
\begin{aligned}
    \sigma^2 = &  \sum \limits_{\basisInd \in \Theta, \basisInd \neq 0} {c}_{\basisInd} =\langle \ten{X},\ten{X} \rangle - x^2_{1 1 \cdots 1}     \\
    = & \sum_{r_1=1}^R \sum_{r_2=1}^R \prod_{k=1}^d {\mat{u}^{(k)}_{r_1}}^{T} {\mat{u}^{(k)}_{r_2}} - \mu^2.
\end{aligned}
\end{equation}
    \item \textbf{Sobol' indices.} 
Based on the obtained model, we can also extract the Sobol' indices~\cite{sobol2001global,saltelli2010variance} for global sensitivity analysis. 
The main sensitivity index $S_j$ measures the contribution by random parameter $\xi_j$ along to the variance $y(\vecpar)$:
\begin{equation}\label{eq:sensitive_main}
\begin{aligned}
&S_j = \frac{\text{Var}\left[ \E \left[ y(\vecpar) | \xi_j\right]  \right]}{\sigma^2} 
\end{aligned}
\end{equation}
where $\E \left[ y(\vecpar) | \xi_j\right]$ denotes the conditional expectation of $y(\vecpar)$ over all random variables except $\xi_j$. The variance of this conditional expectation can be estimated as
\begin{equation}
\begin{aligned}
     \text{Var}\left[ \E \left[ y(\vecpar) | \xi_j\right]  \right] = & \sum_{i_j=2}^{p+1}x_{1\cdots i_{j} 1\cdots1}^2\\
    =&
   \sum_{i_j=2}^{p+1}{\left [\sum_{r=1}^R  \vect{u}_r^{(j)}(i_j) \prod \limits_{k\neq j}  \vect{u}_r^{(k)}(1) \right ]}^2.
\end{aligned}
\end{equation}
The total sensitivity index $T_j$ measures the contribution to the variance of $y(\vecpar)$ by variable $\vecpar_j$ and its interactions with all other variables:
\begin{equation}\label{eq:sensitive_total}
\begin{aligned}
& T_j = 1- \frac{\text{Var}\left[ \E \left[ y(\vecpar) | \vecpar_{\backslash j} \right]  \right]}{\sigma^2}.
\end{aligned}    
\end{equation}
Here $\vecpar_{\backslash j}$ includes all elements of $\vecpar$ except $\xi_j$. The involved variance of a conditional expectation is estimated as

\begin{equation}
\begin{aligned}
    &\text{Var}\left[ \E \left[ y(\vecpar) | \vecpar_{\backslash j}\right]  \right] \\
    =&\sum_{(i_1,i_2, \cdots i_d), \; i_j=1} x^2_{i_1 \cdots i_d} - x^2_{1 1 \cdots 1} \\
    =&\sum_{r_1=1}^R \sum_{r_2=1}^R {\mat{u}^{(j)}_{r_1}(1)} {\mat{u}^{(j)}_{r_2}(1)} \prod_{ k\not=j} {\mat{u}^{(k)}_{r_1}}^{T} {\mat{u}^{(k)}_{r_2}}  - \mu^2.
\end{aligned}    
\end{equation}
Similarly, we can also express any higher-order index representing the effect from the interaction between a set of variables with an analytical form.
\end{itemize}

\section{Numerical Results}
\label{sec:numerical_results}
In this section, we will verify the proposed tensor-regression uncertainty quantification method in one synthetic function and two photonic/ electronic IC benchmarks. 

\subsection{Baseline Methods for Comparison} 
We compare our proposed method with the following approaches. 
\begin{itemize} 
    \item Tensor regression with adaptive sampling based on space exploration only introduced in Section~\ref{sec:explore_sampling} (denoted as Space). 
    \item Tensor regression with adaptive sampling based on exploiting nonlinearity only introduced in model with only Section~\ref{sec:nonlinear_sampling} (denoted as Nonlinear).
    \item Tensor regression model with random sampling  (denoted as Rand). In each iteration of adding samples, new samples are simply randomly selected. 
    \item Fixed-rank tensor regression (denoted as Fixed rank). This method uses a tensor ridge regularization in the regression objective function~\cite{guo2011tensor}:
    \begin{equation}\label{eq:tensor_ridge_regression}
    \begin{aligned}
            \min_{\{\mat{U}^{(k)}\}_{k=1}^d} f(\ten{X}) = &  h(\ten{X})
            +  \lambda \sum_{k=1}^d \|\mat{U}^{(k)}\|_{\rm F}^2.
    \end{aligned}
    \end{equation}
    The standard ridge regression does not induce a sparse structure.
    We will keep the tensor rank fixed in solving Eq.~\reff{eq:tensor_ridge_regression}.
    
    \item Sparse gPC expansion with a total degree truncation~\cite{luthen2020sparse} (denoted as Sparse gPC). With the truncation scheme in Eq.~\reff{eq:total_scheme}, we compute the gPC coefficients by solving the following problem:
    \begin{equation}
    \label{eq:classical_lasso}
        \min_{\vect{\hat{c}}}  \frac{1}{2} \sum_{n = 1}{\left( y_n -  \sum_{\basisInd \in \Theta} \hat{c}_{\basisInd}\multiGPC_{\basisInd}(\vecpar_{n}) \right)}^2 + \lambda \|\vect{\hat{c}}\|_1.
    \end{equation}
\end{itemize}
\begin{table}[t]
\centering
\caption{Model Comparisons on the Synthetic Function}
\label{tab:syn_function}
\begin{threeparttable}
\begin{tabular}{ccccccc}
\toprule
& Sample \#     & Variable \#  & Mean &  Std & Testing\\
\midrule
Monte Carlo  & $10^5$ & N/A   & -162.95       &  4.80    &  N/A \\
Sparse gPC   & 380  & 5151      & -163.04         &  2.27  & 2.3\%\\
Fixed rank   & 380  & 15x100      & -163.24        &  5.02  & 1.01\%\\
\textbf{Proposed}   & 380  & 3x100\tnote{*}  & -162.93    &  4.86 & 0.37\%\\
\bottomrule
\end{tabular}
\begin{tablenotes}
   \item[*] In the alternating solver, there are 100 subproblems with 3 unknown variables in each one (the rank has been shrunk). 
  \end{tablenotes}
\end{threeparttable}
\end{table}

\subsection{Synthetic Function (100-dim)}
We first consider the following high-dimensional analytical function~\cite{marelli2014uqlab}:
\begin{align}\label{eq:syn_function}
     y(\vecpar) & =  3 - \frac{5}{d}\sum \limits_{k=1}^d k \xi_k + \frac{1}{d} \sum \limits_{k=1}^d k \xi_k^3  +  \xi_1 \xi_2^2 + \xi_2 \xi_4  \nonumber \\
    & - \xi_3 \xi_5 + \xi_{51} + \xi_{50} \xi_{54}^2+ \ln{(\frac{1}{3d}\sum \limits_{k=1}^d k(\xi_k^2 + \xi_k^4))}
\end{align}
where dimension $d=100$, $\xi_{20} \sim \U([1,3])$, and $\xi_k \sim \U([1,2]), k\not=20$. 
We aim to approximate $f(\vecpar)$ by a tensor-regression gPC model and perform sensitivity analysis. 

Assume that we use 2nd-order univariate basis functions for each random variable, then we will need $3^{100}$ multi-variate basis functions in total. To approximate the coefficient tensor, we initialize it with a rank-5 CP decomposition and use $q=0.5$ in regularization. We initialize the training with 200 Latin-Hypercube samples and adaptively select 9 batches of additional samples, with each batch having 20 new samples. We test the accuracy of different models on additional $10^5$ samples. 
Fig.~\ref{fig:syn_fun}~(a) shows the relative $\ell_2$ testing errors (\ie, $ \frac{\|\mat{y}(\vecpar) - \hat{\mat{y}}(\vecpar) \|_2}{\|\mat{y}(\vecpar)\|_2}$) of different methods. 
The testing errors may not monotonically decrease since more samples can not strictly guarantee the convergence of the surrogate model. However, see from the figure, we can generally conclude that more training samples lead to a better model approximation and the proposed sampling method outperforms the others.
Fig.~\ref{fig:syn_fun}~(b) shows the estimated tensor rank as the number of training samples increases.
The proposed method shrinks the tensor rank differently from other methods while achieving the best performance.   
It shows that a correct determination of the tensor rank helps the function approximation.  
Fig.~\ref{fig:syn_fun}~(c) plots the predicted probability density function of our obtained model which is estimated via a kernel density estimator. It matches the Monte Carlo simulation result of the original function very well. 

We compare the complexity and accuracy of different methods in Table~\ref{tab:syn_function}. We treat the result from $10^5$ Monte Carlo simulations as the ground truth. 
For the other models, the mean and standard deviation are both extracted from the polynomial coefficients. 
Given the same amount of (limited) training samples, the proposed method achieves the highest approximation accuracy. 


Now we perform sensitivity analysis to identify the random variables that are most influential to the output. Fig.~\ref{fig:sen_syn} plots the main and total sensitivity metrics from the proposed method and a Monte Carlo estimation~\cite{saltelli2010variance} with $10^7$ simulations. With much fewer function evaluations, our proposed method can precisely identify the indices of some most dominant random variables that contribute to the output variance.

\begin{figure}[t]
    \centering
    \includegraphics[width=2.8in]{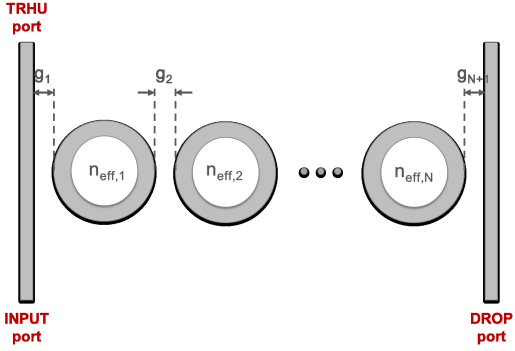}
    \caption{Schematic of a band-pass filter with 9 micro-ring resonators.}
    \label{fig:PIC_image}
\end{figure}

\begin{table}[t]
\centering
\caption{Model Comparisons on the Photonic Band-pass Filter}
\label{tab:pic19}
\begin{tabular}{ccccccc}
\toprule
& Sample \#     & Variable \#  & Mean &  Std & Error\\
\midrule
Monte Carlo  & $10^5$ & N/A   & 21.6511       &  0.0988 &  N/A \\
Sparse gPC   & 100  & 210     & 21.6537       &  0.0735 & 0.39\%\\
Fixed rank   & 100  & 12x19   & 21.6677       &  0.1906 & 0.52\%\\
\textbf{Proposed}   & 100  & 3x19  & 21.6567 &  0.0955  & 0.16\%\\
\bottomrule
\end{tabular}
\end{table}

\begin{figure*}[t]
    \centering
    \includegraphics[width=2\columnwidth]{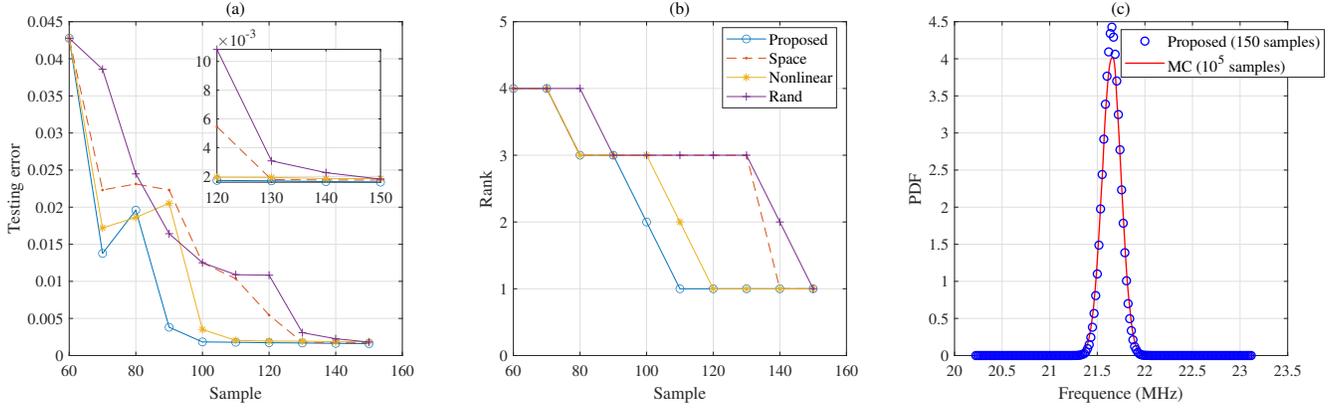}
    \caption{Result of the photonic filter. (a) Testing error on $10^5$ MC samples. (b) The estimated rank. (c) Probability density functions of the 3-dB bandwidth $f_{\text{3dB}}$ at the DROP port.}
    \label{fig:pic19}
\end{figure*}
\begin{figure}[t]
    \centering
    \includegraphics[width=1\columnwidth]{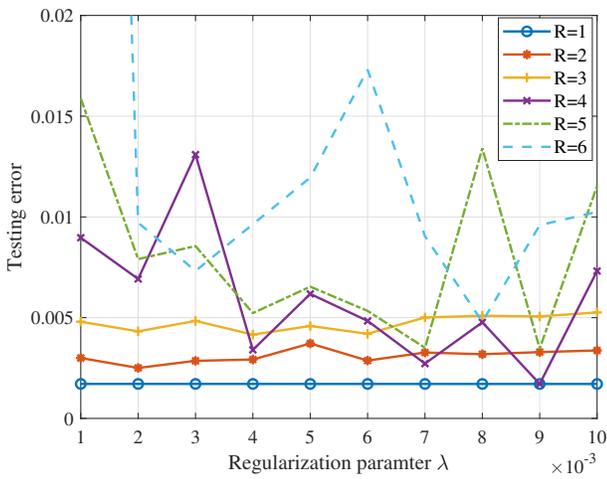}
    \caption{One-shot approximations for the photonic band-pass filter with 800 training samples under different ranks and $\lambda$. The rank-1 initialization works the best in this example.}
    \label{fig:pic19_lr}
\end{figure}

\subsection{Photonic Band-pass Filter (19-dim)}

Now we consider the photonic band-pass filter in Fig.~\ref{fig:PIC_image}. This photonic IC has 9 micro-ring resonators, and it was originally designed to have a 3-dB bandwidth of 20 GHz, a 400-GHz free spectral range, and a 1.55-nm operation wavelength. A total of 19 independent Gaussian random parameters are used to
describe the variations of the effective phase index ($n_{\text{neff}}$) of each ring, as well as the gap (g) between adjacent rings and between the first/last ring and the bus waveguides. We aim to approximate the 3-dB bandwidth $f_{\text{3dB}}$ at the DROP port as a tensor-regression gPC model.

We use 2nd order univariate polynomial basis functions for each random parameter and have $3^{19}$ multivariate basis functions in total in the tensor regression gPC model. We initialize the gPC coefficients as a rank-4 CP tensor decomposition and set $q=0.5$ in our regularization. We initialize the training with 60 Latin-Hypercube samples and adaptively select 9 batches of additional samples, with each batch have 10 new samples. We test the obtained model with additional $10^5$ samples. 
Fig.~\ref{fig:pic19}~(a) shows the relative $\ell_2$ testing errors. The proposed method outperforms the others in the first few adaptive sampling rounds. All the models perform similarly when the ranks are all shrunk to 1.  
Fig.~\ref{fig:pic19}~(b) shows the estimated tensor rank as the number of training samples increases. The tensor ranks are shrunk gradually in all cases, but our proposed method finds the best rank with minimal samples. 
Fig.~\ref{fig:pic19}~(c) plots the predicted probability density function of our obtained result. Since the benchmark has a relatively small standard deviation, the limited approximation error is revealed as the discrepancy around the peak.

In order to see the influence of the tensor rank initialization, we do the one-shot approximation with different initial tensor ranks $R$ and different regularization parameters $\lambda$ as illustrated in Fig.~\ref{fig:pic19_lr}. 
For a specific benchmark, the best-estimated tensor rank highly depends on the number of training samples. 
Given the limited number of simulation samples, the rank-1 initialization works the best in this example. It coincides with the results shown in Fig.~\ref{fig:pic19}, where the predicted rank is $1$.
We also compare the complexity and accuracy of all methods in Table~\ref{tab:pic19}. 
The proposed method achieves the best accuracy with limited simulation samples.

\subsection{CMOS Ring Oscillator (57-dim)}
\begin{figure}[t]
    \centering
    \includegraphics[width=0.75\columnwidth]{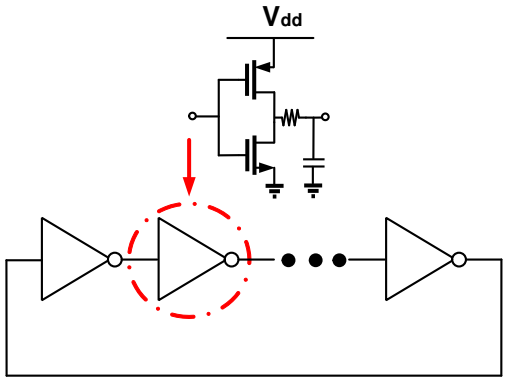}
    \caption{Schematic of a CMOS ring oscillator.}
    \label{fig:CMOS_image}
\end{figure}
\begin{figure}[t]
    \centering
    \includegraphics[width=1\columnwidth]{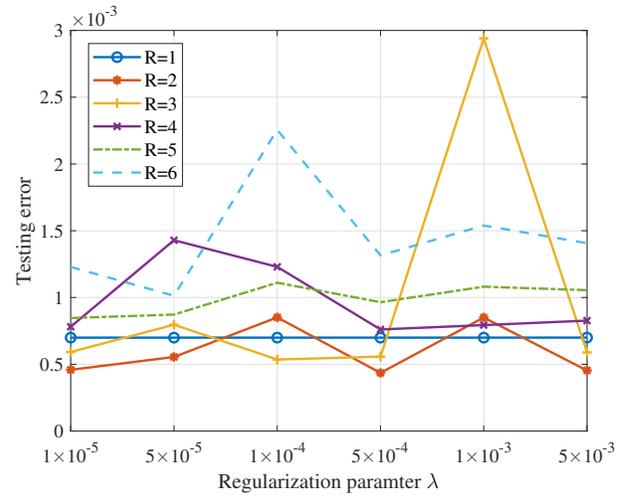}
    \caption{One-shot approximations for the CMOS ring oscillator with 150 training samples under different ranks and $\lambda$. In this example, the rank-2 model works the best in most cases.}
    \label{fig:cmos57_lr}
\end{figure}

\begin{figure*}[t]
    \centering
    \includegraphics[width=2\columnwidth]{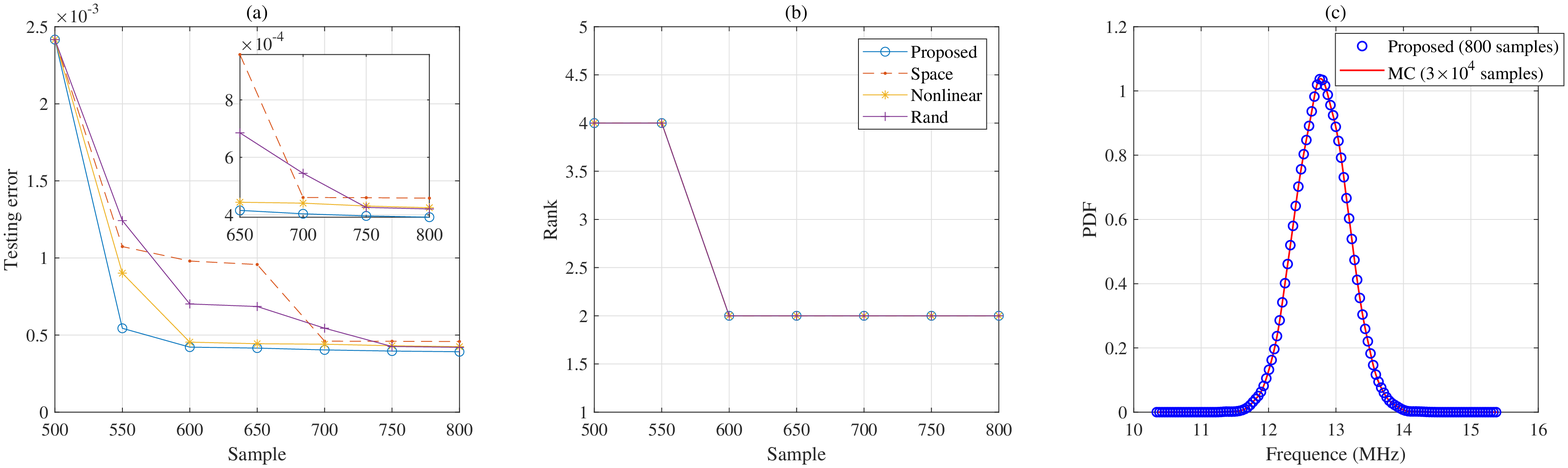}
    \caption{Results of the CMOS ring oscillator. (a) Testing error on $3\times10^4$ MC samples. (b) The estimated rank. (c) Probability density functions of the oscillator frequency.}
    \label{fig:cmos57}
\end{figure*}

We continue to consider the 7-stage CMOS ring oscillator in Fig.~\ref{fig:CMOS_image}. This circuit has 57 random variation parameters, including Gaussian parameters describing the temperature, variations of threshold voltages and gate-oxide thickness, and uniform-distribution parameters describing the effective gate length/width. We aim to approximate the oscillator frequency with tensor-regression gPC under the process variations. 

We use 2nd-order univariate basis functions for each random parameter, leading to $3^{57}$ multivariate basis functions in total. We initialize the gPC coefficients as a rank-4 tensor and set $q=0.5$ in the regularization term. We initialize the training with $500$ Latin-Hypercube samples and adaptively select 300 additional samples in total by 6 batches. We test the obtained model with $3\times 10^4$ additional samples. 
Fig.~\ref{fig:cmos57}~(a) shows the relative $\ell_2$ testing errors of all methods. 
The proposed method outperforms other methods significantly when the number of samples is small. Fig.~\ref{fig:cmos57}~(b) shows that the estimated tensor rank reduces to $2$ in all methods. 
Fig.~\ref{fig:cmos57}~(c) plots the predicted probability density function of the obtained tensor regression model, which is indistinguishable from the result of Monte Carlo simulations. 

\begin{table}[t]
\centering
\caption{Model Comparisons on the CMOS Ring Oscillator}
\label{tab:ring57}
\begin{tabular}{ccccccc}
\toprule
& Sample \#     & Variable \#  & Mean &  Std & Error\\
\midrule
Monte Carlo  & $3\times10^4$ & N/A   & 12.7920      &  0.3829    &  N/A \\
Sparse gPC   & 600  & 1711      & 12.7931         &  0.3777  & 0.11\%\\
Fixed rank   & 600  & 12x57      & 12.7929         &  0.3822  & 0.10\%\\
\textbf{Proposed} & 600  & 6x57  & 12.7918    &  0.3830 & 0.04\%\\
\bottomrule
\end{tabular}
\end{table}

We do the one-shot approximation with different initial tensor ranks $R$ and different regularization parameters $\lambda$ as illustrated in Fig.~\ref{fig:cmos57_lr}. Conforming with the results shown in Fig.~\ref{fig:cmos57}, a rank-2 model is more suitable in this example. 
We compare the proposed method with the fixed rank model and the 2nd-order sparse gPC in Table~\ref{tab:ring57}, where the proposed compact tensor model is shown to have the best approximation accuracy.

\section{Conclusion}
\label{sec:conclusion}
This paper has proposed a tensor regression framework for quantifying the impact of high-dimensional process variations. By low-rank tensor representation, this formulation can reduce the number of unknown variables from an exponential function of parameter dimensionality to only a linear one. Therefore it works well with a limited simulation budget. We have addressed two fundamental challenges: automatic tensor rank determination and adaptive sampling. The tensor rank is estimated via a $\ell_{q}/ \ell_{2}$-norm regularization. 
The simulation samples are chosen based on a two-stage adaptive sampling method, which utilizes the Voronoi cell volume estimation and the nonlinearity measure of the quantity of interest. 
Our model has been verified by both synthetic and realistic examples with $19$ to $100$ random parameters. The numerical experiments have shown that our method can well capture the high-dimensional stochastic performance with much fewer simulation data.  



\appendices

\section{Proof of Lemma~\ref{lemma:variational}}
\label{sec:variational_ineq}

We consider two cases $\alpha \in \left(0,2\right)$~\cite{jenatton2010structured} and $\alpha = 2$~\cite{guo2011tensor}.

When $\alpha \in \left(0,2\right)$, $\kappa(\boldsymbol{\eta}) :=  \frac{1}{2}\sum_{r=1}^p \frac{y^2_r}{\eta_r} + \frac{1}{2}\|\boldsymbol{\eta}\|_\beta$ is a continuously differentiable function for any $\eta_i \in \left(0, \infty \right)$. When $y_r \not= 0$, $\lim \limits_{\eta_r \rightarrow \infty} \kappa(\boldsymbol{\eta}) = \infty$ and $\lim \limits_{\eta_r \rightarrow 0} \kappa(\boldsymbol{\eta}) = \infty$.
Therefore, the infimum of $\kappa(\boldsymbol{\eta})$ exists and it is attained.
According to the first-order optimality and enforcing the derivative w.r.t. $\eta_r~(\eta_r>0)$ to be zero, we can obtain
\begin{equation}
\eta_r = 	{\left|y_r\right|}^{2-\alpha}  \|\boldsymbol{\eta}\|_{\frac{\alpha}{2-\alpha}}^{\alpha-1} .
\end{equation}
With $y_r = \|\boldsymbol{\eta}\|_{\frac{\alpha}{2-\alpha}}^{\frac{1-\alpha}{2-\alpha}} {\eta_r}^{\frac{1}{2-\alpha}}$, 
we have
$\|\boldsymbol{\eta}\|_{\frac{\alpha}{2-\alpha}} = \|\boldsymbol{\eta}\|_{\frac{\alpha}{2-\alpha}}^{\frac{1-\alpha}{2-\alpha}} {(\sum_{r=1}^p {\eta_r}^{{\frac{\alpha}{2-\alpha}}})}^{{\frac{1}{\alpha}}} = \|\mat{y}\|_{\alpha},$
therefore we obtain the optimal solution $\eta_r = 	{\left|y_r\right|}^{2-\alpha}  \|\mat{y}\|_{\alpha}^{\alpha-1} $ in Lemma~\ref{lemma:variational}.  
If $y_r = 0$, the solution to $\min_{\boldsymbol{\eta}\ge 0} \kappa(\boldsymbol{\eta})$ is $\eta_r=0$, which is also consistent with Lemma~\ref{lemma:variational}.

When $\alpha = 2$, $\|\boldsymbol{\eta}\|_1$ is non-differentiable. Given a scalar $y_r$, we have $y_r = \frac{{y_r}^2}{2\eta_r}+\frac{1}{2}\eta_r$ only when $\eta_r=y_r$  (we let $\frac{{y_r}^2}{2\eta_r} = 0$ when $y_r=\eta_r=0$). 
Similarly, given a vector $\mat{y} \in \R^p$, 
we have $\|\mat{y}\|_1 = \frac{1}{2}\sum_{r=1}^p \frac{{y_r}^2}{\eta_r} + \frac{1}{2}\|\boldsymbol{\eta}\|_1$ only when $\boldsymbol{\eta}=\left|\mat{y}\right|$, which is also consistent with Lemma~\ref{lemma:variational}.

\section{Detailed Derivation of Eq.~\reff{eq:sol_sub_U}}
\label{sec:BCD_solution}
Let $\mat{\Lambda} = \text{diag}(\frac{1}{{\eta}_1},\ldots,\frac{1}{{\eta}_{R}})$ and $*$ denote a Hadamard product, 
we can rewrite the objective function of an $\mat{U}^{(k)}$-subproblem as
\begin{equation}
\begin{aligned}
    &f_k(\mat{U}^{(k)})   \\
    = &\frac{1}{2} \sum_{n=1}^N \left[ y_n -
    \langle \mat{U}^{(k)} {\mat{U}^{(\setminus k)}}^T, \mat{B}^n_{(k)}  \rangle 
    \right]^2 + \frac{\lambda}{2}\sum_{r=1}^R \frac{ \| \mat{u}_{r}^{(k)} \|_2^2 }{{\eta}_r}\\
    = & \frac{1}{2} \sum_{n=1}^N \left[ y_n -
    \Tr\left(\mat{U}^{(k)} {\left( \tilde{\mat{B}}_{(k)}^n \right) } ^T\right) 
    \right]^2 + \frac{\lambda}{2}\Tr(\mat{U}^{(k)} \mat{\Lambda} {\mat{U}^{(k)}}^T) \nonumber
\end{aligned}
\end{equation}
with $\tilde{\mat{B}}^n_{(k)} = \mat{B}^n_{(k)} \mat{U}^{(\setminus k)}$. 
When the dimension $d$ is large,  it is intractable to store and compute $\mat{B}^n_{(k)} \in \R^{(p+1) \times (p+1)^{(d-1)}}$ or $\mat{U}^{(\setminus k)} \in \R^{(p+1)^{(d-1)} \times R}$. Fortunately, based on the property of Khatri-Rao product, we can compute $\tilde{\mat{B}}^n_{(k)}$ as 
\begin{equation*}
\begin{aligned}
&\tilde{\mat{B}}^n_{(k)} = \mat{B}^n_{(k)} \mat{U}^{(\setminus k)} \\
&= \vect{\uniGPC}^{(k)}(\xi_k^n)[\vect{\uniGPC}^{(d)}(\xi_d^n)^T \mat{U}^{(d)} * \cdots  \vect{\uniGPC}^{(k+1)}(\xi_{k+1}^n)^T \mat{U}^{(k+1)}  *\\ 
&\vect{\uniGPC}^{(k-1)}(\xi_{k-1}^n)^T \mat{U}^{(k-1)} * 
\cdots \vect{\uniGPC}^{(1)}(\xi_1^n)^T \mat{U}^{(1)}].
\end{aligned}
\end{equation*}
Enforcing the following 1st-order optimality condition 
\begin{equation}
\begin{aligned}
&\frac{\partial f_k(\mat{U}^{(k)})}{\partial \mat{U}^{(k)}} = \\
&-\frac{1}{2} \sum_{n=1}^N \left[ y_n -\Tr( \mat{U}^{(k)} {(\tilde{\mat{B}}_{(k)}^n)} ^T) \right] \tilde{\mat{B}}_{(k)}^n + \lambda \mat{U}^{(k)}\mat{\Lambda} = \mat{0}, \nonumber
\end{aligned}
\end{equation}
we can obtain the analytical solution in Eq.~\reff{eq:sol_sub_U}.

\section{An example to show Observation~\ref{obs:distance}}
\label{sec:distance_example}
Suppose we already have two samples ${[0.2, 0.6]}$, and we consider a candidate sample ${0.4}$ in the interval $[0,1]$ equipped with a uniform distribution. Then, based on Box–Muller transform, their corresponding Gaussian-distributed samples are ${[-0.8416,0.2533]}$ and $-0.2533$, respectively.
It is easy to know that the PDF value of sample $0.2533$ is larger than sample $-0.8416$ in a standard Gaussian distribution.
Apparently, the candidate sample is equally close to the two examples in a uniform-sampled space, but it is closer to the one with a higher probability density in the Gaussian-sampled space. 


\small{
\bibliographystyle{IEEEtran}
\bibliography{Bib/reference}
}

\begin{IEEEbiography}
[{\includegraphics[width=1in,height=1.25in,clip,keepaspectratio]{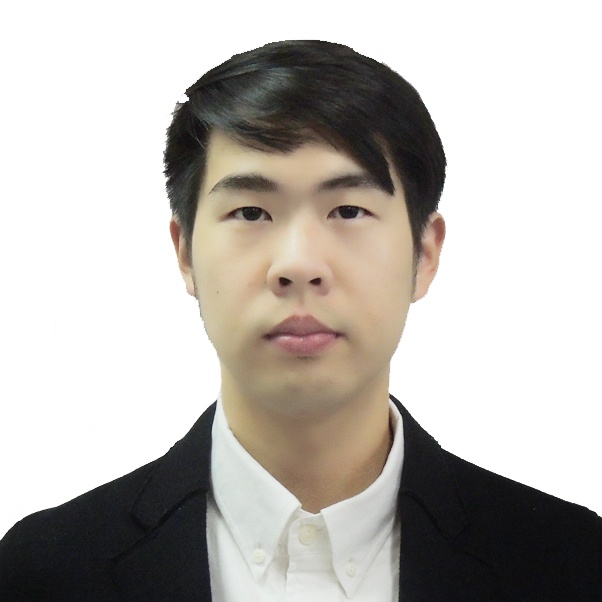}}]{Zichang He} received the B.E. degree in Detection, Guidance and Control Technology in 2018 from Northwestern Polytechnical University, Xi'an, China. In 2018 he joined the Department of Electrical and Computer Engineering at University of California, Santa Barbara as a Ph.D. student.

Zichang's research activities are mainly focused on uncertainty quantification and tensor related topics with applications on design automation, machine learning, and quantum computing. He is the recipient of best student paper award in IEEE Electrical Performance of Electronic Packaging and Systems (EPEPS) conference in 2020 and the Outstanding Teaching Assistant award in the department of ECE at UCSB in 2020 and 2021.
\end{IEEEbiography}

\begin{IEEEbiography}
 [{\includegraphics[width=1in,height=1.25in,clip,keepaspectratio]{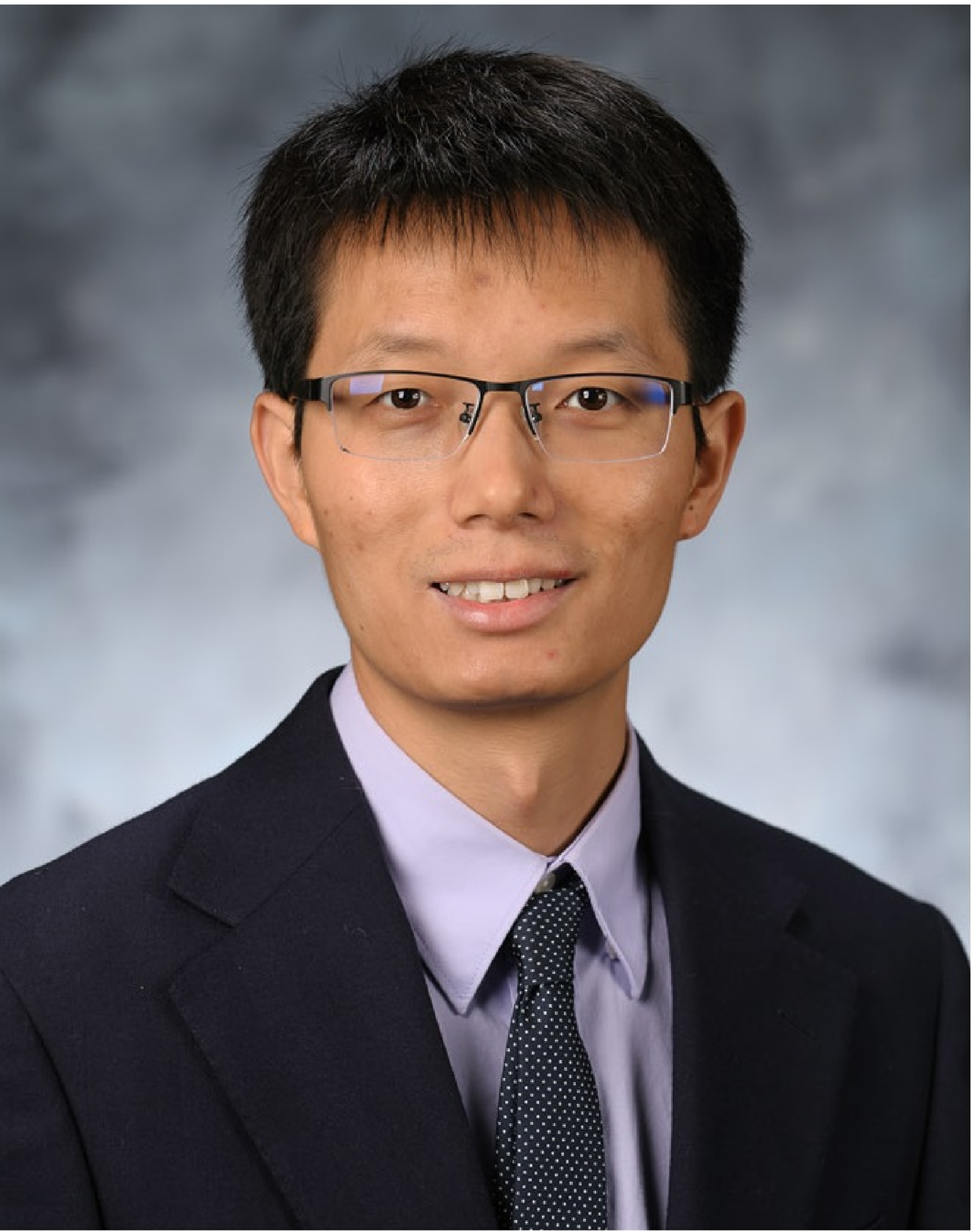}}]{Zheng Zhang} (M'15) received his Ph.D degree in Electrical Engineering and Computer Science from the Massachusetts Institute of Technology (MIT), Cambridge, MA, in 2015. He is an Assistant Professor of Electrical and Computer Engineering with the University of California at Santa Barbara (UCSB), CA. His research interests include uncertainty quantification and tensor computational methods with applications to multi-domain design automation, robust/safe and high-dimensional machine learning and its algorithm/hardware co-design. 

Dr. Zhang received the Best Paper Award of IEEE Transactions on Computer-Aided Design of Integrated Circuits and Systems in 2014, two Best Paper Awards of IEEE Transactions on Components, Packaging and Manufacturing Technology in 2018 and 2020, and three Best Conference Paper Awards (IEEE EPEPS 2018 and 2020, IEEE SPI 2016). His Ph.D. dissertation was recognized by the ACM SIGDA Outstanding Ph.D. Dissertation Award in Electronic Design Automation in 2016, and by the Doctoral Dissertation Seminar Award (i.e., Best Thesis Award) from the Microsystems Technology Laboratory of MIT in 2015. He received the NSF CAREER Award in 2019 and Facebook Research Award in 2020. 
\end{IEEEbiography}

\end{document}